  \newtheorem{theorem}{Theorem}
  \newtheorem{corollary}{Corollary}
\newcommand{\pr}{\ensuremath{\mathrm{Pr}}}
\newcommand{\tw}{\ensuremath{\mathrm{tw}}}
\newcommand{\bn}{\ensuremath{\mathcal{B}}}
\newcommand{\true}{\ensuremath{T}}
\newcommand{\false}{\ensuremath{F}}
\newcommand{\yes}{{\em yes}}
\newcommand{\no}{{\em no}}
\newcommand{\mprob}[1] {\pr(#1)}
\newcommand{\cprob}[2] {\pr(#1\;|\;#2)}
\newcommand{\dproblem}[3]{
\begin{quote}
\begin{flushleft} 
  \noindent 
  \textsc{#1}\\
  \textbf{Instance: }#2.\\
  \textbf{Question: }#3?\\
\end{flushleft}
\end{quote}}
\newcommand{\fproblem}[3]{
\begin{quote}
\begin{flushleft} 
  \noindent 
  \textsc{#1}\\
  \textbf{Instance: }#2.\\
  \textbf{Output: }#3.\\
\end{flushleft}
\end{quote}}
\newcommand{\pt}{\ensuremath{\mathsf{P}}}
\newcommand{\np}{\ensuremath{\mathsf{NP}}}
\newcommand{\pp}{\ensuremath{\mathsf{PP}}}
\newcommand{\bpp}{\ensuremath{\mathsf{BPP}}}
\newcommand{\nppp}{\ensuremath{\mathsf{NP^{\mathsf{PP}}}}}
\newcommand{\conppp}{\ensuremath{\textrm{co-}\mathsf{NP^{\mathsf{PP}}}}}
\newcommand{\npnppp}{\ensuremath{\mathsf{NP^{\mathsf{NP^{\mathsf{PP}}}}}}}
\newcommand{\map}{\textsc{MAP}}
\newcommand{\maximin}{\textsc{Maximin A Posteriori}}
\newcommand{\monotone}{\textsc{Monotonicity}}
\newcommand{\tuning}{\textsc{Parameter Tuning}}
\newcommand{\tunablemonotone}{\textsc{Tunable Monotonicity}}
\newcommand{\mindep}{\textsc{MAP-independence}}
\newcommand{\wmindep}{\textsc{Weak MAP-independence}}
\newcommand{\maxmindep}{\textsc{Maximum MAP-independence}}
\newcommand{\mdep}{\textsc{MAP-dependence}}
\newcommand{\infer}{\textsc{Inference}}
\newcommand{\majsat}{\textsc{Majsat}}
\newcommand{\amajsat}{\textsc{A-Majsat}}
\newcommand{\eamajsat}{\textsc{EA-Majsat}}
\newcommand{\partamajsat}{\textsc{Partition-free A-Majsat}}
\journal{International Journal of Approximate Reasoning}
\begin{document}

\begin{frontmatter}

\title{Motivating explanations in Bayesian networks using MAP-independence}
\author[inst1]{Johan Kwisthout}
\ead{j.kwisthout@donders.ru.nl}
\ead[url]{http://www.socsci.ru.nl/johank/}

\affiliation[inst1]{
            organization={Donders Institute for Brain, Cognition, and Behaviour, Radboud University},
            addressline={\\Thomas van Aquinostraat 4}, 
            city={Nijmegen},
            postcode={6525GD}, 
            country={The Netherlands}}

\begin{abstract}
In decision support systems the motivation and justification of the system's diagnosis or classification is crucial for the acceptance of the system by the human user. In Bayesian networks a diagnosis or classification is typically formalized as the computation of the most probable joint value assignment to the hypothesis variables, given the observed values of the evidence variables (generally known as the MAP problem). While solving the MAP problem gives the most probable explanation of the evidence, the computation is a black box as far as the human user is concerned and it does not give additional insights that allow the user to appreciate and accept the decision. For example, a user might want to know to whether an unobserved variable could potentially (upon observation) impact the explanation, or whether it is irrelevant in this aspect. In this paper we introduce a new concept, MAP-independence, which tries to capture this notion of relevance, and explore its role towards a potential justification of an inference to the best explanation. We formalize several computational problems based on this concept and assess their computational complexity.
\end{abstract}

\begin{highlights}
\item We introduce MAP-independence as a novel concept in Bayesian networks, indicating potential impact of an intermediate (hidden) variable to the MAP explanation.
\item We discuss how this concept may contribute to justifying MAP explanations, for example in the context of a decision support system.
\end{highlights}

\begin{keyword}
Bayesian Networks \sep Most Probable Explanations \sep Relevance \sep Explainable AI \sep Computational Complexity
\end{keyword}

\end{frontmatter}


\section{Introduction}

With the availability of petabytes of data and the emergence of `deep' learning as an AI technique to find statistical regularities in these large quantities of data, artificial intelligence in general and machine learning in particular has arguably entered a new phase since its emergence in the 1950s. Deep learning aims to build hierarchical models representing the data, with every new layer in the hierarchy representing ever more abstract information; for example, from individual pixels to lines and curves, to geometric patterns, to features, to categories. Superficially this might be related to how the human visual cortex interprets visual stimuli and seeks to classify a picture to be that of a cat, rather than of a dog.

When describing in what sense a cat is different from a dog, humans may use features and categories that we agreed upon to be defining features of cats and dogs, such as whiskers, location and form of the ears, the nose, etc. The deep learning method, however, does not adhere to features we humans find to be good descriptors; it bases its decisions where and how to `carve nature's joints' solely on basis of the statistics of the data. Hence, it might very well be that the curvature of the spine (or some other apparently `random' feature) happens to be {\em the} statistically most important factor to distinguish cats from dogs. This imposes huge challenges when the machine learning algorithm is asked to {\em justify} its classification to a human user. The sub-field of {\em explainable AI} has recently emerged to study how to align statistical machine learning with informative user-based motivations or explanations. 
Explainable AI, however, is not limited to deep neural network applications. Any AI application where trustworthiness is important benefits from justification and transparency of its internal process \cite{Gunning19}, and this includes decision support systems that are based on Bayesian networks \cite{Mihaljevic21}, which is the focus of this paper. In these systems typically one is interested in the hypothesis that best explains the available evidence; for example in a medical case, the infection that is most probable given a set of health complaints and test findings \cite{Kyrimi21}.

Note that `explainability' in explainable AI is in principle a triadic relationship between what needs to be explained, the explanation, and the {\em user who seeks the explanation} \cite{Ras18}. An explanation will be more satisfying (`lovelier', in Peter Lipton's \cite{Lipton91} terms) if it allows the user to gain more understanding about the phenomenon to be explained. Lacave and D\'{i}ez \cite{Lacave00} review explanation in the context of a Bayesian network. In their work, three major categories of explanation has been identified. For example, the focus can be on explanation of the {\em evidence} or observed variables that need to be explained, where the most probable joint value assignment (or {\em MAP explanation}) serves as this explanation\footnote{In this paper we do not touch the question whether `best' is to be identified with `most probable'. The interested reader is referred to the vast literature on inference to the best explanation such as \cite{Lipton91}, and more in particular to some of our earlier work \cite{Kwisthout13} that discusses the trade-off between probability and informativeness of explanations.}. Alternatively, the focus can be on explanation of the {\em model}, i.e., the structure of the Bayesian network can be explained by providing conditional independence tests and justification in the form of expert assessment of probability distributions or training data sets. Finally, and for us most relevant, the focus can be on explaining the reasoning process. Here, the MAP explanation itself needs to be explained or motivated in terms of an explication of the reasoning process that needs further justification.

In this paper our aim is to improve the user's understanding of a specific MAP explanation by {\em explicating the relevant information} that contributed to said MAP explanation. That is, we seek to {\em justify} the MAP explanation by zooming in on the reasoning process. In some way, in the computation of the most probable joint value assignment to the hypothesis variables given the evidence, the process of marginalizing out the intermediate variables makes the decision more opaque. Some of these variables may have a bigger impact (i.e., are more influential) on the eventual decision than others, and this information is lost in the process. For example, the absence of a specific test result (i.e., a variable we marginalize out in the MAP computation) may lead to a different explanation of the available evidence compared to when a negative (or positive) test result {\em were} present. In this situation, this variable is more influential to the eventual explanation, and hence more relevant for the justification, than if the best explanation would be the same, irrespective of whether the test result was positive, negative, or missing. Our approach in this paper is to justify an explanation by showing which of these variables were influential towards arriving at this explanation. An intermediate variable is {\em relevant} in this justification if it has potential {\em influence} on the explanation.

This approach complements other approaches that focus on relevance of variables for justifying decisions. For MAP (i.e., where a Bayesian network is partitioned into hypothesis, evidence, and intermediate nodes) several authors focus on minimizing the set of hypothesis variables, pruning those variables that are deemed less relevant, thereby avoiding over-specification \cite{Shimony91,Yuan11}; see \cite{Kwisthout13} for an alternative proposal where over-specification is avoided by allowing the best explanation to be a {\em set} of joint value assignments rather than a singleton set. Others propose an assessment of the relevance of the evidence variables (rather than intermediate variables) for justifying an explanation \cite{Suermondt92,Meekes15,Kyrimi16}. Similarly, in Bayesian classifiers (without intermediate nodes) justification is often sought in terms of the impact an observable variable may have on the classification \cite{Albini21,Ribeiro16,Lundberg17}. In contrast to these approaches, we assume that the partition between hypothesis variables, evidence variables, and intermediate variables is a given, and we focus on assessing the contribution of the intermediate variables towards a specific explanation.

Our perspective has roots in Pearl's early work on conditional independence \cite{Pearl87}. Pearl suggests that human reasoning is in principle based on {\em conditional independence}: The  organizational structure of human memory is such that it allows for easily retrieving context-dependent relevant information. For example (from \cite[p.3]{Pearl87}): The color of my friend's car is normally not related to the color of my neighbour's car. However, when my friend tells me she almost mistook my neighbour's car for her own, this information suddenly becomes relevant for me to understand, and for her to explain, this mistake. That is, the color of both cars is independent but becomes conditionally dependent on the evidence\footnote{Graphically one can see this as a so-called common-effect structure, where $C_1$ and $C_2$ are variables that represent my car's, respectively my neighbour's car's, color; both variables have a directed edge towards the variable $M$ that indicates whether my friend misidentified the cars or not. When $M$ is unobserved, $C_1$  and $C_2$ are independent, but they become conditionally dependent on observation of $M$.}. 

In this paper we will argue that Pearl's proposal to model context-dependent (ir)relevance as conditional (in)dependence is in fact too strict. It generally leads to too many variables that are considered to be relevant: for some it is likely the case that, while they may not be conditionally independent on the hypothesized explanation given the evidence, they do not contribute to understanding {\em why} some explanation $h$ is better than the alternatives. That means, for justification purposes their role is limited. In the remainder of this paper we will build on Pearl's work, yet provide a stronger notion of context-dependent relevance and irrelevance of variables relative to explanations of observations. Our goal is to advance explainable AI in the context of Bayesian networks by formalizing the problem of {\em justification} of an explanation (i.e., given an AI-generated explanation, advance the user's understanding why this explanation is preferred over others) into a computational problem that captures some aspects of this justification; in particular, by opening up the `marginalization black box' and show which variables contributed to this decision. We show that this problem is intractable in the general case, but also give fixed-parameter tractability results that show what constraints are needed to render it tractable.

To summarize, we are interested in the potential applicability of this new concept for motivation and justification of MAP explanations, with a focus on its theoretical properties. The remainder of this paper is structured as follows. In the next section we offer some preliminary background on Bayesian networks and computational complexity and share our conventions with respect to notation with the reader. In section \ref{sec:MapIndep} we introduce so-called MAP-independence as an alternative to conditional independence, embed this concept in the literature, and elaborate on the potential of these computational problems for justifying explanations in Bayesian networks. In section \ref{sec:Formal} we introduce several formal computational problems based on this notion, and give complexity proofs and fixed-parameter tractability results for these problems. We conclude in section \ref{sec:Conclusion}.

\section{Preliminaries and notation}

In this section we give some preliminaries and introduce the notational conventions we use throughout this paper. The reader is referred to textbooks like \cite{Darwiche09} for more background both on Bayesian networks and relevant aspects of computational complexity theory.

A Bayesian network $\bn = (\mathbf{G}_{\mathcal{B}}, \pr)$ is a probabilistic graphical model that succinctly represents a joint probability distribution $\mprob{\mathbf{V}} = \prod_{i=1}^n \cprob{V_i}{\pi(V_i)}$ over a set of discrete random variables $\mathbf{V}$. $\bn$ is defined by a directed acyclic graph $\mathbf{G}_{\mathcal{B}} = (\mathbf{V}, \mathbf{A})$, where $\mathbf{V}$ represents the stochastic variables and $\mathbf{A}$ models the conditional (in)dependencies between them, and a set of parameter probabilities $\pr$ in the form of conditional probability tables (CPTs). In our notation $\pi(V_i)$ denotes the set of parents of a node $V_i$ in $\mathbf{G}_{\mathcal{B}}$. We use upper case to indicate variables, lower case to indicate a specific value of a variable, and boldface to indicate sets of variables respectively joint value assignments to such a set. $\Omega(V_i)$ denotes the set of value assignments to $V_i$, with $\Omega(\mathbf{V_a})$ denoting the set of joint value assignment to the set $\mathbf{V_a}$.

One of the key computational problems in Bayesian networks is the problem to find the most probable explanation for a set of observations, i.e., a joint value assignment to a designated set of variables (the explanation set) that has maximum posterior probability given the observed variables (the joint value assignment to the evidence set) in the network. If the network includes variables that are neither observed nor to be explained (known as intermediate variables) this problem is typically referred to as \map. We use the following formal definition:

\fproblem{\map} 
{A Bayesian network $\bn = (\mathbf{G}_{\mathcal{B}}, \pr)$, where $\mathbf{V}(\mathbf{G}_{\mathcal{B}})$ is partitioned into a set of evidence nodes $\mathbf{E}$ with a joint value assignment $\mathbf{e}$, a set of intermediate nodes $\mathbf{I}$, and an explanation set $\mathbf{H}$}
{A joint value assignment $\mathbf{h^*}$ to $\mathbf{H}$ such that for all joint value assignments $\mathbf{h^{\prime}}$ to $\mathbf{H}$, $\cprob{\mathbf{h^*}}{\mathbf{e}} \geq \cprob{\mathbf{h^{\prime}}}{\mathbf{e}}$}

We assume that the reader is familiar with standard notions in computational complexity theory, notably the classes \pt\ and \np, \np-hardness, and polynomial time (many-one) reductions. The class \pp\ is the class of decision problems that can be decided by a probabilistic Turing machine in polynomial time; that is, where \yes-instances are accepted with probability strictly larger than $\sfrac{1}{2}$ and \no-instances are accepted with probability no more than $\sfrac{1}{2}$. A problem in \pp\ might be accepted with probability $\sfrac{1}{2} + \epsilon$ where $\epsilon$ may depend exponentially on the input size $n$. Hence, it may take exponential time to increase the probability of acceptance (by repetition of the computation and taking a majority decision) close to $1$; this in contrast to the related complexity class \bpp\ that contains problems that can be solved in polynomial time by a randomized algorithm with probability arbitrarily close to $1$. This makes \pp\ a powerful class; we know for example that $\np\subseteq\pp$ and the inclusion is assumed to be strict. The canonical \pp-complete decision problem is \majsat: given a Boolean formula $\phi$, does the majority of truth assignments to its variables satisfy $\phi$?

In computational complexity theory, so-called {\em oracles} are theoretical constructs that increase the power of a specific Turing machine. An oracle (e.g., an oracle for \pp-complete problems) can be seen as a `magic sub-routine' that answers class membership queries (e.g, in \pp) in a single time step. In this paper we are specifically interested in classes defined by non-deterministic Turing machines with access to a \pp-oracle. Such a machine is very powerful, and likewise problems that are complete for the corresponding complexity classes \nppp\ (such as \map\ and \tuning) and \conppp\ (such as \monotone) are highly intractable \cite{KwisthoutGaag08,Park04,Gaag04}. Machines that can be characterized by such classes can themselves also be used as an oracle machine to, e.g., a non-deterministic Turing machine, leading to classes like \npnppp\ with complete problems such as \maximin\ and \tunablemonotone\ \cite{Campos05,Kwisthout09}.

Finally we wish to introduce the concept of {\em fixed-parameter tractability} to the reader. Oftentimes, \np-hard computational problems can be rendered tractable when the set of instances is constrained to instances where the value of some (set of) problem parameter(s) is bounded. Formally a parameterized problem $k-\Pi$ is called fixed-parameter tractable for a parameter $k$ if an instance $i$ of $\Pi$ can be decided in time $\mathcal{O}(p(|i|)f(k))$ for a polynomial $p$ and an arbitrary function $f$. Following the discussion in \cite{Kwisthout11} we liberally mix integer and (monotone) rational parameters in our analyses.

\section{MAP-independence}
\label{sec:MapIndep}

In the previous section we introduced the MAP problem as the problem to find the most probable explanation (viz., the joint value assignment to a set of hypothesis variables given observations in the network). In this paper we are not so much concerned with {\em finding} this explanation, but rather with {\em motivating} what information did or did not contribute to a given explanation. That is, rather than providing the `trivial' justification ``$\mathbf{h^*}$ is the best explanation for $\mathbf{e}$, since $\mathrm{argmax}_{\mathbf{h}} \cprob{\mathbf{H}=\mathbf{h}}{\mathbf{e}} = \mathrm{argmax}_{\mathbf{h}} \sum_{\mathbf{i} \in \Omega(\mathbf{I})} \cprob{\mathbf{H}=\mathbf{h},\mathbf{i}}{\mathbf{e}} = \mathbf{h^*}$'' our goal is to partition the set $\mathbf{I}$ into variables $\mathbf{I^+}$ that are {\em relevant} to establishing the best explanation and variables that are {\em irrelevant}. One straightforward approach, motivated by Pearl \cite{Pearl87}, would be to include variables in $\mathbf{I^+}$ if they are conditionally dependent on $\mathbf{H}$ given $\mathbf{e}$, and in $\mathbf{I^-}$ when they are conditionally independent from $\mathbf{H}$ given $\mathbf{e}$ and to {\em motivate} the sets $\mathbf{I^+}$ and $\mathbf{I^-}$ in terms of a set of independence relations. This is particularly useful when inclusion in $\mathbf{I^+}$ is {\em triggered} by the presence of an observation, such as in Pearl's example where `color of my friend's car' and `color of my neighbour's car' become dependent on each other once we learn that my friend confused both cars.

We argue that this way of partitioning intermediate variables into relevant and irrelevant ones, however useful, might not be the full story with respect to explanation. There is a sense in which a variable has an explanatory role in motivating a conclusion that goes beyond conditional (in)dependence. Take for example the small binary network in Figure \ref{fig:xai_inter}\subref{fig:xai_example}. Assume that we want to motivate the best explanation for $A$ given the evidence $C = c$, i.e., we want to motivate the outcome of $\mathrm{argmax}_a \cprob{A=a}{C=c} = \mathrm{argmax}_a \sum_{B,D} \cprob{A=a,B,D}{C=c}$ in terms of variables that contribute to this explanation. Now, obviously $D$ is {\em not} relevant, as it is d-separated from $A$ given $C$. But the role of $B$ is less obvious. This node is obviously not conditionally independent from $A$ given $C$. 

Whether $B$ plays a  role in motivating the outcome of the MAP query is dependent on whether $\mathrm{argmax}_a \sum_{D} \cprob{A=a,B = b,D}{C=c} = \mathrm{argmax}_a \sum_{D} \cprob{A=a,B = \bar b,D}{C=c}$. If both are equal ($B$'s value, were it observed, would have been irrelevant to the MAP query) then $B$ arguably has no explanatory role. If both are unequal than the fact that $B$ is unobserved may in fact be crucial for the explanation. For example, if $B$ represents a variable that encodes a `default' ($b$) versus `fault' ($\bar b$) condition (with $\mprob{b} > \mprob{\bar b}$) the {\em absence} of information about a possible fault (i.e., $B$ is unobserved) can lead to a different MAP explanation than the {\em observation} that $B$ takes its default value; e.g., if $\mprob{b} = 0.6$, $\cprob{a}{c,b} = 0.6$, and $\cprob{a}{c,\bar b} = 0.3$ we have that $\cprob{a}{c} = 0.48$ yet as $\cprob{a}{c,b} = 0.6$ so the MAP explanation changes from $\bar a$ to $a$ on the observation of the {\em default} value $b$. This information is lost in the marginalization step but it helps motivate {\em why} $\bar a$ is the best explanation of $c$.

\begin{figure}[h!]
    \centering
    \begin{subfigure}[b]{0.45\textwidth}
        \centering
        \includegraphics[width=5cm]{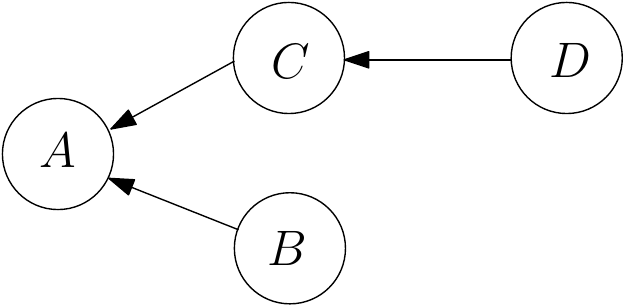}
        \caption{An example small network. Note that the role of $B$ in motivating the best explanation for $A$ given an observation for $C$ is context-dependent and may be different for different observations for $C$, as well as for different conditional probability distributions; hence it cannot be read off the graph alone.} 
        \label{fig:xai_example}
    \end{subfigure}
    \hfill
    \begin{subfigure}[b]{0.45\textwidth}
        \centering
        \includegraphics[width=5cm]{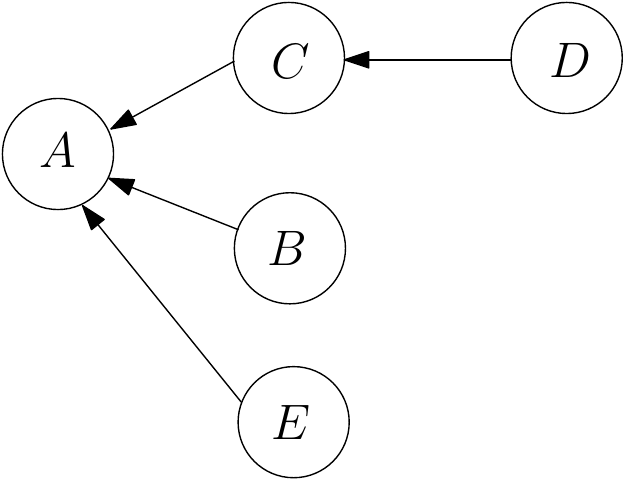}
        \caption{In this adapted network there is not only a potential context-dependent {\em individual} role for $B$ and $E$ in motivating the best explanation for $A$, but there now may also be an {\em interaction effect}: both $B$ and $E$ may each be individually MAP-independent relative to $A$, while the {\em set} $\{B, E\}$ may no longer be MAP-independent relative to $A$. In the latter situation we call $A$ {\em Weakly} MAP-independent from $\{B, E\}$ and we reserve the term {\em Strongly} MAP-independent if the best explanation to $A$ is identical for all joint value assignments to the $\{B, E\}$.}
        \label{fig:inter_example}
     \end{subfigure}
\caption{MAP-independence and Strong vs. Weak MAP-independence}
\label{fig:xai_inter}
\end{figure}

Thus, the relevance of $B$ for the explanation of $A$ may need a {\em different} (and broader) notion of independence, as also suggested by \cite{Kwisthout15}. Indeed, in this example, variable $D$ is irrelevant for explaining $A$ as $D$ is conditionally independent from $A$ given $C$; yet, we could also argue that $B$ is irrelevant for explaining $A$ {\em if its value, were it observed, could not influence the explanation for $A$}. We introduce the term {\em MAP-independence} for this (uni-directional) relationship; we say that $A$ is MAP-independent from $B$ given $C = c$ when $\forall_{b \in \Omega(B)}\mathrm{argmax}_{a^*} \cprob{A=a^*,B = b}{C = c} = a$ for a specific value assignment $a\in\Omega(A)$. In general, for an unobserved variable $I$ we have that the MAP explanation $\mathbf{h^*}$ is MAP-independent from $I$ given the evidence if the explanation would not have been different had $I$ been observed to any value from its domain, and MAP-dependent if this is not the case.

In Figure \ref{fig:xai_inter}\subref{fig:inter_example} we adapted the example network with an additional variable $E$, with the following (relevant) prior and conditional probabilities: $\mprob{b}=0.6$, $\mprob{e}=0.4$, $\cprob{a}{c,b,e}=0.6$, $\cprob{a}{c,\bar b,e}=0.2$,  $\cprob{a}{c,b,\bar e}=0.3$, and $\cprob{a}{c,\bar b,\bar e}=0.3$. The reader can infer that, with observation $C=c$, $A$ is MAP-independent from each of the variables $B$ and $E$ independently, as $\cprob{a}{c,b}=0.42$, $\cprob{a}{c,\bar b}=0.26$ and $\cprob{a}{c,e}=0.44$, $\cprob{a}{c,\bar e}=0.3$, yet the interaction between $E$ and $B$ makes $A$ {\em not} independent from the {\em set} $\{B,E\}$ as the MAP changes from $\bar a$ to $a$ for $B=b,E=e$. We call $A$ {\em weakly} MAP-independent from the set $\{B,E\}$. Note that for singleton variable sets, weak and strong MAP-independence collapse.

An explication of how $I$ may impact or fail to impact the most probable explanation of the evidence will both help motivate the system's advice as well as offer guidance in further decisions, e.g., to decide when to gather additional evidence \cite{Gaag93,Gaag11} to make the MAP explanation more robust. Typically, the number of intermediate variables that impact decisions (such that the explanation is weakly MAP-independent from these variables) is low \cite{Druzdzel94}; however, when interactions between intermediate variables are taken into account, a challenge may be to find the smallest subset of intermediate variables from which the explanation is MAP-independent in the stronger sense. We will further extend upon this optimization problem in the next section. 

Finally, we observe that there is a clear relation between the concept of MAP-independence and the so-called {\em Same-decision probability} defined by Choi and colleagues. Given a threshold decision made while marginalizing out unobserved variables $\mathbf{H}$, this same-decision probability is [...] {\em the probability that we would have made the same decision had we known the states of [...] $\mathbf{H}$} \cite[p1417]{Choi12}. Specifically, for binary decisions and a threshold of $\sfrac{1}{2}$, this probability is $1$ for $\mathbf{H}$ if and only if the decision is MAP-independent of $\mathbf{H}$.

\section{Formal problem definition and results}
\label{sec:Formal}

The computational problem of interest is to decide upon the set $\mathbf{I^+}$, the relevant variables that contribute to establishing the best explanation given the evidence, in the sense as discussed above. We start with giving formal definitions for deciding whether weak or strong MAP-independence of a given set of variables relative to the evidence can be established and the optimization variant of this problem (i.e., to establish the smallest set of relevant variables). The latter problem is only interesting for strong MAP independence; in the weak case this is a trivial iteration over the variables in $\mathbf{I}$. In subsequent sub-sections we elaborate on the computational complexity of these problems as well as consider fixed-parameter tractability results that indicate under which circumstances the problems are feasible.

We start with defining \mindep. For later simplification of the complexity proofs we look at the marginal definition of MAP; note that $\mathrm{argmax}_{\mathbf{H}} \cprob{\mathbf{H}}{\mathbf{e}} = \mathrm{argmax}_{\mathbf{H}} \mprob{\mathbf{H},\mathbf{e}}$.

\dproblem{\mindep}
{A Bayesian network $\bn = (\mathbf{G}_{\mathcal{B}}, \pr)$, where $\mathbf{V(G)}$ is partitioned into a set of evidence nodes $\mathbf{E}$ with a joint value assignment $\mathbf{e}$, a non-empty explanation set $\mathbf{H}$, a non-empty set of nodes $\mathbf{R}$ for which we want to decide MAP-independence relative to $\mathbf{H}$, and a set of intermediate nodes $\mathbf{I}$}
{Is $\forall_{\mathbf{r},\mathbf{r'} \in \Omega(\mathbf{R})} \mathrm{argmax}_{\mathbf{H}} \mprob{\mathbf{H},\mathbf{R} = \mathbf{r}, \mathbf{E} = \mathbf{e}} = \mathrm{argmax}_{\mathbf{H}} \mprob{\mathbf{H},\mathbf{R} = \mathbf{r'}, \mathbf{E} = \mathbf{e}}$}

\noindent Observe that the complement problem \mdep\ is defined similarly with \yes- and \no-answers reversed.

In this formal definition we allow for interaction in the set $\mathbf{R}$. The related problem \wmindep, where we consider MAP-independence for each singleton variable on its own, is defined as follows:

\dproblem{\wmindep}
{As in \mindep}
{Is $\forall_{R \in \mathbf{R}}\forall_{r,r' \in \Omega(R)} \mathrm{argmax}_{\mathbf{H}} \mprob{\mathbf{H},R = r, \mathbf{E} = \mathbf{e}} = \mathrm{argmax}_{\mathbf{H}} \mprob{\mathbf{H},R = r', \mathbf{E} = \mathbf{e}}$}

\noindent Observe, that while $|\Omega(\mathbf{R})| = \mathcal{O}(c^{|\mathbf{R}|})$, where $c = \max_{W \in V(G)}\Omega(W)$, we have that in deciding \wmindep\ we test at most $\mathcal{O}(c|\mathbf{R}|)$ assignments.

\noindent Apart from assessing strong MAP-independence for specific sets $\mathbf{R}$, we might be interested in the maximal subset $\mathbf{R}\subseteq\mathbf{I}$ such that $\mathbf{H}$ is strongly MAP-independent from $\mathbf{R}$ given the evidence. We refer to this problem as \maxmindep, defined (as a decision problem) as follows:

\dproblem{\maxmindep}
{A Bayesian network $\bn = (\mathbf{G}_{\mathcal{B}}, \pr)$, where $\mathbf{V(G)}$ is partitioned into a set of evidence nodes $\mathbf{E}$ with a joint value assignment $\mathbf{e}$, a non-empty explanation set $\mathbf{H}$, a set of potentially relevant intermediate nodes $\mathbf{I^P}$, and a set of other intermediate nodes $\mathbf{I^O}$; furthermore, a positive integer $k$}
{Is there a subset $\mathbf{R}\subseteq\mathbf{I^P}$ of size at least $k$, such that $\forall_{\mathbf{r},\mathbf{r'} \in \Omega(\mathbf{R})} \mathrm{argmax}_{\mathbf{H}} \cprob{\mathbf{H},\mathbf{R} = \mathbf{r}}{\mathbf{e}} = \mathrm{argmax}_{\mathbf{H}} \cprob{\mathbf{H},\mathbf{R} = \mathbf{r'}}{\mathbf{e}}$}

\noindent Observe that we allow in this problem definition for variables that are not part of the relevance assessment. These variables could, for example, be variables that are conditionally independent from the explanation given the evidence and therefore not relevant to be part of the formal assessment.

\subsection{Computational complexity}

We will show in this sub-section that an appropriate decision variant of \mindep\ is \conppp-complete and thus resides in the same complexity class as the \monotone\ problem \cite{Gaag04}. For \wmindep\ it is easy to show membership of \conppp, by a similar argument, but hardness is unlikely given that the universal quantifier runs over a linear number of cases. For \maxmindep\ we show that the problem is in \npnppp\ and is polynomial-time many-one reducible from the \partamajsat\ problem which we conjecture (but were unable to prove) to be \npnppp-complete. We first proceed with proving complexity of \mindep. 

In order to establish the computational complexity of these problems, there are several caveats in the problem definitions above. The first one is that we wish to isolate the complexity of verifying that the MAP stays the same for all variables in the relevance assessment, and separate it from establishing MAP in the first place (an \nppp-hard problem). The second caveat is that, for complexity analyses, inference (and MAP) are defined as decision problems, rather than functional problems; the above definitions include functional aspects (such as maximization problems) that don't fit well with traditional complexity analyses of comparable problems.

To address the first caveat we require $\mathbf{R}$ to be non-empty and to include a joint value assignment $\mathbf{h^*}$ to $\mathbf{H}$ in the input. We do {\em not} explicitly require in the formal definition of the decision problem that $\mathbf{h^*}$ is in fact the MAP explanation, as this would effectively make \mindep\ a so-called promise problem \cite{Goldreich06}, obfuscating the complexity analysis. 

Prior to considering the second caveat let us first look at a common definition\cite{Kwisthout11} of the decision variant \map-D:

\dproblem{\map-D}
{A Bayesian network $\bn = (\mathbf{G}_{\mathcal{B}}, \pr)$, where $\mathbf{V(G)}$ is partitioned into a set of evidence nodes $\mathbf{E}$ with a joint value assignment $\mathbf{e}$, a non-empty explanation set $\mathbf{H}$, and a set of intermediate nodes $\mathbf{I}$; furthermore, a rational number $q \in [0,1\rangle$}
{Is there a joint value assignment $\mathbf{h}$ to $\mathbf{H}$ such that $\mprob{\mathbf{H}=\mathbf{h}, \mathbf{E} = \mathbf{e}} > q$}

This captures the crucial sources of complexity (summation over the intermediate variables, maximization over the hypothesis set) of \map. Note that \map-D is \nppp-complete; further observe that both the summation and the maximization aspects of \map\ are here formulated as decisions. Taking this into account we formulate \mindep-D as follows:

\dproblem{\mindep-D}
{A Bayesian network $\bn = (\mathbf{G}_{\mathcal{B}}, \pr)$, where $\mathbf{V(G)}$ is partitioned into a set of evidence nodes $\mathbf{E}$ with a joint value assignment $\mathbf{e}$, a non-empty explanation set $\mathbf{H}$ with a joint value assignment $\mathbf{h^*}$, a non-empty set of nodes $\mathbf{R}$ for which we want to decide MAP-independence relative to $\mathbf{H}$, and a set of intermediate nodes $\mathbf{I}$; rational number $s$}
{Is, for each joint value assignment $\mathbf{r}$ to $\mathbf{R}$, $\mprob{\mathbf{h^*},\mathbf{r},\mathbf{e}} > s$}

Note that as a decision variant of \mindep\ there is still a slight caveat, as the probability of $\mprob{\mathbf{h^*},\mathbf{r},\mathbf{e}}$ can be different for each joint value assignment $\mathbf{r}$, implying that the `generic' threshold $s$ can either be too strict ($\mathbf{h}$ is still the MAP explanation although the test fails) or too loose (there is another explanation $\mathbf{h'}$ which is the MAP explanation although the test passes). As the number of joint value assignments $|\Omega(\mathbf{R})|$ can be exponential in the size of the network (and thus we cannot include individual thresholds $s_i$ in the input of the decision problem without blowing up the input size), this nonetheless appears to be the closest decision problem variant that still captures the crucial aspects\footnote{One of the reviewers of the conference version of this paper \cite{Kwisthout21} asked whether a `suitable' (neither too loose nor too strict) threshold $s$ can (or is guaranteed) to exist. A trivial example with two unconnected binary nodes $H$ (the MAP variable, with $\mprob{H=h}=0.51$ and $h^* = h$) and uniformly distributed $R$ (and either no evidence or another unconnected and arbitrarily instantiated evidence node), and a threshold $s=0.5$ suffices to show that indeed there exists networks where a suitable $s$ exists. On the other hand it is also easy to construct a small network where no suitable $s$ exists; take a ternary node $H$ instead, with binary and uniformly distributed $R$ as parent (and again no or unconnected evidence) and define $\cprob{H=h_1}{r}=0.56$, $\cprob{H=h_2}{r}=0.44$, $\cprob{H=h_3}{r}=0$, $\cprob{H=h_1}{\bar r}=0.44$, $\cprob{H=h_2}{\bar r}=0.4$, and $\cprob{H=h_3}{\bar r}=0.16$; furthermore let $h* = h_1$. Note that the joint probabilities $\mprob{H,R} = \cprob{H}{R} \times 0.5$ as $R$ is uniformly distributed. While $h_1$ is the MAP for each value of $R$, a threshold of $0.22$ is too loose while a threshold of $0.22+\epsilon$ is too strict for each $\epsilon > 0$.}  of \mindep, particularly the universal quantification over all joint value assignments to $\mathbf{R}$ and the probabilistic inference and threshold test.

For the hardness proof we reduce from the canonical satisfiability \conppp-complete variant \amajsat\ \cite{Wagner86} defined as follows:

\dproblem{\amajsat}
{A Boolean formula $\phi$ with $n$ variables $\{x_1,\ldots,x_n\}$, partitioned into the sets $\mathbf{A} = \{x_1,\ldots,x_k\}$ and $\mathbf{M} = \{x_{k+1},\ldots,x_n\}$ for some $1 \leq k \leq n$}
{Does, for every truth instantiation $\mathbf{x_a}$ to $\mathbf{A}$, the majority of truth instantiations $\mathbf{x_m}$ to $\mathbf{M}$ satisfy $\phi$}

As a running example for our reduction we use the formula $\phi_{ex} = \neg(x_1 \wedge x_2)\vee (x_3 \vee x_4)$, with $\mathbf{A} = \{x_1,x_2\}$ and $\mathbf{M} = \{x_3,x_4\}$. This is a \yes-instance of \amajsat: for each truth assignment to $\mathbf{A}$, at least three out of four truth assignments to $\mathbf{M}$ satisfy $\phi$. 

We construct a Bayesian network $\mathcal{B}_{\phi}$ from a given Boolean formula $\phi$ with $n$ variables. For each propositional variable $x_i$ in $\phi$, a binary stochastic variable $X_i$ is added to $\mathcal{B}_{\phi}$, with possible values \true\ and \false\ and a uniform probability distribution. For each logical operator in $\phi$, an additional binary variable in $\mathcal{B}_{\phi}$ is introduced, whose parents are the variables that correspond to the input of the operator, and whose conditional probability table is equal to the truth table of that operator. For example, the value \true\ of a stochastic variable mimicking the \textit{and}-operator would have a conditional probability of $1$ if and only if both its parents have the value \true, and $0$ otherwise. The top-level operator in $\phi$ is denoted as $V_{\phi}$. In Figure \ref{fig:example_construct} the network $\mathcal{B}_{\phi}$ is shown for the formula $\neg(x_1 \wedge x_2)\vee (x_3 \vee x_4)$.

\begin{figure}[ht!]
    \centering
    \includegraphics[width=7cm]{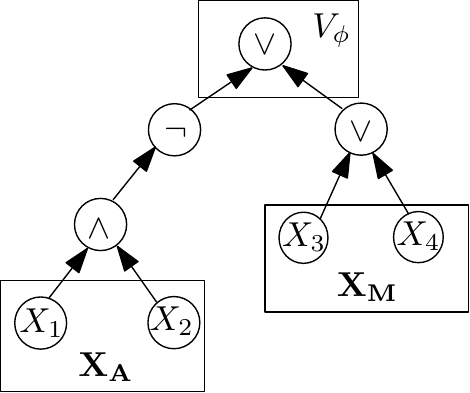}
    \caption{The network $\mathcal{B}_{\phi}$ created from the \amajsat\ instance $(\phi, \{x_1,x_2\},\{x_3,x_4\})$ per the description above.}
    \label{fig:example_construct}
\end{figure}

\begin{theorem}\mindep-D is \conppp-complete.
\end{theorem}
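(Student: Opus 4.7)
The plan is to prove \conppp-completeness in two parts: membership and hardness, following the template of similar proofs for \monotone.

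For membership, I would argue that the complement problem \mdep-D is in \nppp. The complement asks whether there exists some joint value assignment $\mathbf{r}\in\Omega(\mathbf{R})$ such that $\mprob{\mathbf{h^*},\mathbf{r},\mathbf{e}} \leq s$. A nondeterministic machine can guess $\mathbf{r}$ in polynomial time and then, with a single query to a \pp-oracle, check the threshold on the joint probability $\mprob{\mathbf{h^*},\mathbf{r},\mathbf{e}}$, which is just the \pp-complete \infer\ problem on the network with the augmented evidence $\{\mathbf{H}=\mathbf{h^*}, \mathbf{R}=\mathbf{r}, \mathbf{E}=\mathbf{e}\}$ and empty query. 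Hence the complement is in \nppp, and \mindep-D itself is in \conppp.

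For hardness, I would reduce from \amajsat using exactly the network $\mathcal{B}_\phi$ already constructed in the preceding text. Given $(\phi, \mathbf{A}, \mathbf{M})$ with $|\mathbf{A}|=k$, I set $\mathbf{H}=\{V_\phi\}$ with $\mathbf{h^*} = (V_\phi=\true)$, $\mathbf{R} = \{X_1,\ldots,X_k\}$ (the stochastic variables mimicking $\mathbf{A}$), $\mathbf{I}$ equal to the variables mimicking $\mathbf{M}$ together with all gate variables, and $\mathbf{E}=\emptyset$ (adding a dummy disconnected observation if the definition is read as requiring $\mathbf{E}$ non-empty, with a corresponding rescaling of $s$). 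The threshold I choose is $s = 2^{-(k+1)}$. Because all $X_i$ have uniform priors and the gate CPTs are deterministic, a direct calculation gives
\[
\mprob{V_\phi=\true,\mathbf{X_A}=\mathbf{x_a}} \;=\; \frac{\#\{\mathbf{x_m}\in\Omega(\mathbf{X_M}) : \phi(\mathbf{x_a},\mathbf{x_m})=\true\}}{2^n}.
\]
Then $\mprob{V_\phi=\true,\mathbf{X_A}=\mathbf{x_a}} > 2^{-(k+1)}$ holds exactly when strictly more than $2^{n-k-1}$ of the $2^{n-k}$ instantiations of $\mathbf{X_M}$ satisfy $\phi$, i.e., exactly when the majority of $\mathbf{x_m}$ satisfy $\phi(\mathbf{x_a},\cdot)$. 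Universally quantifying over $\mathbf{x_a}\in\Omega(\mathbf{R})$ yields the \amajsat\ condition, establishing equivalence of the two instances.

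The step I expect to require the most care is justifying the use of a single global threshold $s$: the paper's own footnote flags that a uniform $s$ can be simultaneously too strict for some $\mathbf{r}$ and too loose for others. My reduction sidesteps this issue precisely because the $X_i$ corresponding to $\mathbf{A}$ are independent and uniformly distributed, so $\mprob{\mathbf{X_A}=\mathbf{x_a}}=2^{-k}$ is constant across all $\mathbf{r}=\mathbf{x_a}$; the joint probability varies only through $\#\mathrm{sat}(\mathbf{x_a})$, and the choice $s=2^{-(k+1)}$ is exactly the strict-majority cutoff for every $\mathbf{x_a}$ simultaneously. I would state this uniformity as the crux of the reduction. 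The only other minor obstacle is the formal requirement that $\mathbf{h^*}$ need not actually be the MAP explanation in \mindep-D, which is convenient here since we simply prescribe $\mathbf{h^*}=(V_\phi=\true)$ without needing to argue about its maximality; this keeps the reduction polynomial and avoids turning \mindep-D into a promise problem.
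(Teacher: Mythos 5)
Your proposal is correct and follows essentially the same route as the paper: membership via nondeterministically guessing a falsifying $\mathbf{r}$ and querying a \pp-oracle for \infer, and hardness via the same reduction from \amajsat\ with $\mathbf{R}=\mathbf{X_A}$, $\mathbf{h^*}=(V_\phi=\true)$, and $s=2^{-|\mathbf{R}|-1}$. Your explicit observation that the uniform priors make $\mprob{\mathbf{X_A}=\mathbf{x_a}}$ constant across all $\mathbf{r}$ --- so that a single global threshold coincides with the strict-majority cutoff for every $\mathbf{x_a}$ simultaneously --- is a welcome elaboration of a point the paper leaves implicit.
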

\begin{proof}To prove membership in \conppp, we give a falsification algorithm for \no-answers to \mindep-D instances, given access to an oracle for the \pp-complete \cite{Cooper90} \infer\ problem. Let $(\mathcal{B},\mathbf{E},\mathbf{e},\mathbf{H},\mathbf{h^*},\mathbf{R},\mathbf{I},s)$ be an instance of \mindep-D. We non-deterministically guess a joint value assignment $\mathbf{\bar r}$, and use the \infer\ oracle to verify that $\mprob{\mathbf{h^*},\mathbf{\bar r},\mathbf{e}} \leq s$, which by definition implies that $\mathbf{H}$ is not MAP-independent from $\mathbf{R}$ given $\mathbf{E} = \mathbf{e}$, establishing a falsification.

To prove hardness, we reduce from \amajsat. Let $(\phi, \mathbf{X_A}, \mathbf{X_M})$ be an instance of \amajsat\ and let $\mathcal{B}_{\phi}$ be the Bayesian network created from $\phi$ as per the procedure described above. We set $\mathbf{R} = \mathbf{X_A}$, $\mathbf{H} = \{V_{\phi}\}$, $\mathbf{E}=\varnothing$, $\mathbf{I} = \mathbf{V(G_{\mathcal{B}})}\setminus\mathbf{R}\cup\{V_{\phi}\}$, $\mathbf{h^*} = \{\true\}$, and $s = 2^{-|\mathbf{R}|-1}$. \\

\begin{tabular}{lp{10.5cm}}
$\implies$ & Assume that $(\phi, \mathbf{X_A}, \mathbf{X_M})$ is a \yes-instance of \amajsat, i.e., for every truth assignment to $\mathbf{X_A}$, the majority of truth assignments to $\mathbf{X_M}$ satisfies $\phi$. Then, by the construction of $\mathcal{B}_{\phi}$, we have $\sum_{\mathbf{r}} \mprob{V_{\phi}=\true,\mathbf{r}} > \sfrac{1}{2}$ and so, as the variables in $\mathbf{R}$ are all uniformly distributed, $\mprob{V_{\phi}=\true,\mathbf{r}} > 2^{-|\mathbf{R}|-1}$ for every joint value assignment $\mathbf{r}$ to $\mathbf{R}$, and so this is a \yes-instance of \mindep-D. \\
$\impliedby$ & Assume that $(\mathcal{B},\varnothing,\varnothing,V_{\phi},\true,\mathbf{R},\mathbf{I}, 2^{-|\mathbf{R}|-1})$ is a \yes-instance of \mindep-D. Given the construction this implies that for all joint value assignments $\mathbf{r}$ it holds that $\mprob{V_{\phi}=\true,\mathbf{r}} > 2^{-|\mathbf{R}|-1}$. But this implies that for all truth assignments to $\mathbf{X_A}$, the majority of truth assignments to $\mathbf{X_M}$ satisfies $\phi$, hence, this is a \yes-instance of \amajsat. \\ 
\end{tabular}\\

\noindent Observe that the construction of $\mathcal{B}_{\phi}$ takes time, polynomial in the size of $\phi$, which concludes our proof. Furthermore, the result holds in the absence of evidence.
\end{proof}

\begin{corollary}
\mdep-D is \nppp-complete.
\end{corollary}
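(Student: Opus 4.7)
The plan is to obtain \nppp-completeness of \mdep-D directly from the preceding theorem via complement-class duality. By definition, \mdep-D is the yes/no-complement of \mindep-D: it is a \yes-instance iff \emph{some} joint value assignment $\mathbf{r}$ to $\mathbf{R}$ satisfies $\mprob{\mathbf{h^*},\mathbf{r},\mathbf{e}} \leq s$. Since \mindep-D has just been shown \conppp-complete under polynomial-time many-one reductions, and such reductions transfer to complement problems (mapping \conppp-completeness to \nppp-completeness), it follows immediately that \mdep-D is \nppp-complete.

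To make this concrete I would also spell out the two parts directly, to avoid relying implicitly on the meta-theorem. For membership in \nppp, I would non-deterministically guess an assignment $\mathbf{\bar r}$ to $\mathbf{R}$ and invoke a \pp-oracle on the \infer\ query to verify $\mprob{\mathbf{h^*},\mathbf{\bar r},\mathbf{e}} \leq s$; a successful guess certifies the \yes-answer. For hardness, I would reuse the construction $\mathcal{B}_{\phi}$ from the theorem's proof, reducing from the \nppp-complete \emajsat\ problem (``does there exist a truth assignment to $\mathbf{A}$ under which the majority of truth assignments to $\mathbf{M}$ fails to satisfy $\phi$?''), keeping the same parameter choices $\mathbf{R} = \mathbf{X_A}$, $\mathbf{H} = \{V_{\phi}\}$, $\mathbf{h^*} = \{\true\}$, $\mathbf{E} = \varnothing$, and $s = 2^{-|\mathbf{R}|-1}$. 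The equivalence established for \mindep-D transfers essentially unchanged: an $\mathbf{r}$ with $\mprob{V_{\phi}=\true,\mathbf{r}} \leq 2^{-|\mathbf{R}|-1}$ corresponds (via uniformity of the $\mathbf{X_A}$-variables) to a truth assignment to $\mathbf{X_A}$ under which at most half of the truth assignments to $\mathbf{X_M}$ satisfy $\phi$.

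I do not expect any substantive technical obstacle here, since both the construction and the threshold calculation are inherited verbatim from the theorem. The one point requiring minor care is the direction of the strict-versus-nonstrict inequality in the threshold test, which flips when passing from the universal to the existential quantifier; this is handled exactly as in the theorem's proof and does not affect polynomiality of the reduction.
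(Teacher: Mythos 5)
Your proposal is correct and takes essentially the same route as the paper, which states the corollary without further proof precisely because \mdep-D is by definition the complement of \mindep-D and many-one reductions carry \conppp-completeness over to \nppp-completeness of the complement; your explicit membership and hardness arguments are faithful unfoldings of that duality. The only nit is terminological: the source problem you call \emajsat\ is really the complement of \amajsat\ (asking for an assignment to $\mathbf{A}$ under which \emph{at most half} of the assignments to $\mathbf{M}$ satisfy $\phi$, not one under which a majority fails), which your own later phrasing states correctly and which does not affect the argument.
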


\noindent Further observe that the \conppp-membership proof of \mindep-D trivially extends to \wmindep-D by giving a suitable counterexample; however, this problem is unlikely to be complete for this class. We now turn to the complexity of \maxmindep-D. We will show that the following problem can be reduced in polynomial time to \maxmindep-D:

\dproblem{\partamajsat}
{A Boolean formula $\phi$ with $n$ variables $\{x_1,\ldots,x_n\}$, positive integer $k < n$}
{Is there a partition of $\{x_1,\ldots,x_n\}$ into the sets $\mathbf{A}$ and $\mathbf{M}$, with $|\mathbf{A}| \geq k$, such that for every truth instantiation $\mathbf{x_a}$ to $\mathbf{A}$, the majority of truth instantiations $\mathbf{x_m}$ to $\mathbf{M}$ satisfies (with $\mathbf{x_a}$) $\phi$}

It can be shown that \partamajsat\ is in \npnppp: non-deterministically guess the partition and use an oracle for \amajsat\ to verify the desired property. However, despite substantial effort, we did not succeed in establishing hardness; while we can give a reduction from \partamajsat\ to \maxmindep, implying that \maxmindep\ is as least as hard as \partamajsat, we therefore cannot show \npnppp-hardness of \maxmindep\footnote{The canonical complete problems for classes in the counting hierarchy can be traced back to Wagner's \cite{Wagner86} extension of early work by Meyer and Stockmeyer \cite{Meyer72,Stockmeyer73} which explicitly uses the given partitioning of the variables and oracle machines in their proof. The flexibility in the size of the sub-sets in the partition makes it difficult to force translation of a traditional \eamajsat\ (the canonical complete satisfiability variant for \npnppp) instance into a \partamajsat\ instance.}.

We start by defining the decision problem in a similar manner as for \mindep-D.

\dproblem{\maxmindep-D}
{A Bayesian network $\bn = (\mathbf{G}_{\mathcal{B}}, \pr)$, where $\mathbf{V(G)}$ is partitioned into a set of evidence nodes $\mathbf{E}$ with a joint value assignment $\mathbf{e}$, a non-empty explanation set $\mathbf{H}$ with a joint value assignment $\mathbf{h^*}$, a set of potentially relevant intermediate nodes $\mathbf{I^P}$, and a set of other intermediate nodes $\mathbf{I^O}$; rational number $s$, positive integer $k$}
{Is there a subset $\mathbf{R}\subseteq\mathbf{I^P}$ of size at least $k$, such that for each joint value assignment $\mathbf{r}$ to $\mathbf{R}$, $\mprob{\mathbf{h^*},\mathbf{r},\mathbf{e}} > s$}

Observe that \maxmindep-D is in \npnppp, as we can non-deterministically guess $\mathbf{R}$ and invoke an oracle for \mindep-D. We now turn to the reduction.

\begin{theorem}\partamajsat\ $\leq^P_m$ \maxmindep-D.
\end{theorem}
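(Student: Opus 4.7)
The plan is to adapt the construction from the \mindep-D hardness proof. Given a \partamajsat\ instance $(\phi, k)$ on $n$ variables, build $\mathcal{B}_{\phi}$ exactly as described earlier, with uniform binary propositional nodes $X_1,\ldots,X_n$ and operator nodes culminating in the top-level node $V_{\phi}$. Set $\mathbf{I^P} = \{X_1,\ldots,X_n\}$, $\mathbf{I^O}$ equal to the remaining operator nodes, $\mathbf{H} = \{V_{\phi}\}$ with $\mathbf{h^*} = \true$, $\mathbf{E} = \varnothing$, threshold $s = 2^{-k-1}$, and carry over the integer $k$. The construction is clearly polynomial, and the key identity is $\mprob{V_{\phi}=\true,\mathbf{r}} = 2^{-n}\cdot\#\{\mathbf{x_m} : \phi(\mathbf{r},\mathbf{x_m})=\true\}$ for any assignment $\mathbf{r}$ to a subset $\mathbf{R}\subseteq\mathbf{I^P}$, with $\mathbf{x_m}$ ranging over the remaining propositional variables.

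For the forward direction, I would first establish a shrinking lemma: any \partamajsat\ witness $(\mathbf{A},\mathbf{M})$ with $|\mathbf{A}|=j\geq 1$ yields a witness with $|\mathbf{A}|=j-1$. The argument moves one variable $x_i$ from $\mathbf{A}$ to $\mathbf{M}$ and sums the two integer counts $\#_{\mathbf{x_m}}\phi((\mathbf{x_a}',T),\mathbf{x_m})$ and $\#_{\mathbf{x_m}}\phi((\mathbf{x_a}',F),\mathbf{x_m})$, each at least $2^{|\mathbf{M}|-1}+1$, giving a total strictly above the new majority threshold $2^{|\mathbf{M}|}$. Iterating, any yes-instance of \partamajsat\ admits a witness with $|\mathbf{A}| = k$. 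Letting $\mathbf{R}$ correspond to this $\mathbf{A}$, the majority condition translates via the identity above into $\mprob{V_{\phi}=\true,\mathbf{r}} > 2^{-n}\cdot 2^{n-k-1} = s$ for every $\mathbf{r}$, so $\mathbf{R}$ witnesses \maxmindep-D.

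For the converse, suppose $\mathbf{R}$ witnesses \maxmindep-D with $|\mathbf{R}| = j\geq k$. From the identity, $\mprob{V_{\phi}=\true,\mathbf{r}}\leq 2^{-n}\cdot 2^{n-j} = 2^{-j}$, so the requirement $2^{-j}>2^{-k-1}$ forces $j\leq k$, hence $j=k$ exactly. The threshold condition then rearranges to $\#\{\mathbf{x_m}:\phi(\mathbf{r},\mathbf{x_m})\} > 2^{n-k-1}$ for every $\mathbf{r}$, which is precisely the majority condition, so $(\mathbf{R},\mathbf{I^P}\setminus\mathbf{R})$ is a valid \partamajsat\ partition.

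The main obstacle is reconciling \partamajsat's flexible range $|\mathbf{A}|\geq k$ with the single fixed threshold $s$ in \maxmindep-D, since the natural threshold encoding ``majority'' scales as $2^{-|\mathbf{R}|-1}$. The choice $s = 2^{-k-1}$ does double duty: it rules out $|\mathbf{R}|>k$ on the grounds that the probability mass is too diluted, while encoding the majority condition exactly when $|\mathbf{R}|=k$. The shrinking lemma is the bridging technical step, as it justifies restricting attention to $|\mathbf{A}|=k$ on the \partamajsat\ side without loss of generality.
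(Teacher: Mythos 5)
Your proposal is correct and follows essentially the same reduction as the paper: the same network $\mathcal{B}_{\phi}$, the same assignment of roles to its nodes, and the same threshold $s = 2^{-k-1}$. The only difference is that you make explicit two steps the paper leaves implicit---the shrinking lemma justifying the restriction to $|\mathbf{A}| = k$ in the forward direction, and the dilution bound $\mprob{V_{\phi}=\true,\mathbf{r}} \leq 2^{-|\mathbf{R}|}$ showing that any \maxmindep-D witness must have size exactly $k$ in the converse---both of which are correct and tighten the argument.
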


\begin{proof}Let $(\phi,k)$ be an instance of \partamajsat, and let $\mathcal{B}_{\phi}$ be the Bayesian network created from $\phi$ as per the procedure described above, but without partitioning the variables $X_i$ into sets $\mathbf{X_A}$ and $\mathbf{X_M}$; we will refer to the set of variables $X_i$ as $\mathbf{X}$. We set $\mathbf{I^P} = \mathbf{X}$, $\mathbf{H} = \{V_{\phi}\}$, $\mathbf{E}=\varnothing$, $\mathbf{I^O} = \mathbf{V(G_{\mathcal{B}})}\setminus\mathbf{I^P}\cup\{V_{\phi}\}$, $\mathbf{h^*} = \{\true\}$, and $s = 2^{-k-1}$. \\

\begin{tabular}{lp{10.5cm}}
$\implies$ & Assume that $(\phi, k)$ is a \yes-instance of \partamajsat. Then there exists a partition of the variables $X_i$ of $\phi$ in the sets $\mathbf{X_A}$ and $\mathbf{X_M}$, with $|\mathbf{X_A}| \geq k$, such that for every truth assignment to $\mathbf{X_A}$, the majority of truth assignments to $\mathbf{X_M}$ satisfies $\phi$. Observe that if this property holds for $\mathbf{X_A}$ with $|\mathbf{X_A}| > k$, it also holds for every subset $\mathbf{X_A^k} \subset\mathbf{X_A}$ with $|\mathbf{X_A^k}| = k$, hence, we assume $|\mathbf{X_A}| = k$ for ease of exposition. We set $\mathbf{R} = \mathbf{X_A}$. By the construction of $\mathcal{B}_{\phi}$, we have $\sum_{\mathbf{r}} \mprob{V_{\phi}=\true,\mathbf{r}} > \sfrac{1}{2}$ and so, as the variables in $\mathbf{R}$ are all uniformly distributed, $\mprob{V_{\phi}=\true,\mathbf{r}} > 2^{-|\mathbf{R}|-1} = 2^{-k-1}$ for every joint value assignment $\mathbf{r}$ to $\mathbf{R}$, and so this is a \yes-instance of \maxmindep-D. \\
$\impliedby$ & Assume that $(\mathcal{B},\varnothing,\varnothing,V_{\phi},\true,\mathbf{I^P},\mathbf{I^O}, k, 2^{-k-1})$ is a \yes-instance of \maxmindep-D. Given the construction this implies the existence of a subset $\mathbf{R}\subseteq\mathbf{I^P}$ with $|\mathbf{X_R}| = k$, such that for all joint value assignments $\mathbf{r}$ to $\mathbf{R}$ it holds that $\mprob{V_{\phi}=\true,\mathbf{r}} > 2^{-k-1}$. But this implies the existence of a partition of the variables $X_i$ of $\phi$ in the sets $\mathbf{X_A}$ and $\mathbf{X_M}$, with $|\mathbf{X_A}|=k$, such that for all truth assignments to $\mathbf{X_A}$, the majority of truth assignments to $\mathbf{X_M}$ satisfies $\phi$. Hence, this is a \yes-instance of \partamajsat. \\ 
\end{tabular}\\

\noindent Observe again that the construction of $\mathcal{B}_{\phi}$ takes time, polynomial in the size of $\phi$, which concludes our proof. Furthermore, again the result holds in the absence of evidence.
\end{proof}

\subsection{Algorithm and algorithmic complexity}

To decide whether a MAP explanation $\mathbf{h^*}$ is MAP-independent from a set of variables $\mathbf{R}$ given evidence $\mathbf{e}$, the straightforward algorithm below\footnote{Source code available online: \url{https://gitlab.socsci.ru.nl/j.kwisthout/most-frugal-explanations}.} shows that the run-time of this algorithm is $\mathcal{O}(\Omega(\mathbf{R})) = \mathcal{O}(2^{|\mathbf{R}|})$ times the time needed for each MAP computation. 

\begin{algorithm}[H]
\SetAlgoLined
\KwIn{Bayesian network partitioned in $\mathbf{E}=\mathbf{e}$, $\mathbf{H}=\mathbf{h^*}$, $\mathbf{R}$, and $\mathbf{I}$.}
\KwOut{\yes\ if $\mathbf{H}$ is MAP-independent from $\mathbf{R}$ given $\mathbf{e}$, \no\ if otherwise.}
 \ForEach{$\mathbf{r}\in\Omega(\mathbf{R})$}
 {
 \If{$\mathrm{argmax}_{\mathbf{H}} \cprob{\mathbf{H},\mathbf{R} = \mathbf{r}}{\mathbf{e}} \neq \mathbf{h^*}$}
 {\Return{\no\;}}
 }
 \Return{\yes\;}
 \caption{Straightforward \mindep\ algorithm}
\end{algorithm}

\noindent This implies that the size of the set $\mathbf{R}$ for which we want to establish MAP independence is the crucial source of complexity assuming constraints that allow \map\ to be computed \cite{Kwisthout11}, respectively approximated \cite{Park04,Kwisthout15}, feasibly. Given known results on fixed-parameter tractability of \map, the following fixed-parameter tractability results can be derived:

\begin{corollary}
Let $c = \max_{W \in V(G)}\Omega(W)$ and let \tw\ be the tree-width of \bn. Then $p$-\mindep\ is fixed-parameter tractable for $p=\{|\mathbf{H}|, |\mathbf{R}|, \tw, c \}$ and $p=\{|\mathbf{H}|, |\mathbf{R}|, |\mathbf{I}|, c \}$.
\end{corollary}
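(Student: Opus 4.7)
The plan is to apply the straightforward \mindep\ algorithm displayed just above the corollary, and to argue that under each of the two parameterizations the outer enumeration and the inner \map\ computation are both fixed-parameter tractable, so that their composition is too.

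First I observe that the outer \textbf{foreach} iterates at most $|\Omega(\mathbf{R})| \leq c^{|\mathbf{R}|}$ times, a bound depending only on the parameters $c$ and $|\mathbf{R}|$. Each iteration performs a single \map\ computation on the same underlying network, but with the evidence set enlarged from $\mathbf{E}$ to $\mathbf{E} \cup \mathbf{R}$ and the joint assignment extended from $\mathbf{e}$ to $\mathbf{e} \cup \mathbf{r}$; crucially, the hypothesis set is still $\mathbf{H}$, the intermediate set is still $\mathbf{I}$, and the graph structure (and thus its treewidth $\tw$) together with the cardinality $c$ are unchanged. Therefore any fixed-parameter tractability result for \map\ in network-level parameters carries over verbatim to this restricted instance.

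For the parameter set $\{|\mathbf{H}|, |\mathbf{R}|, \tw, c\}$ I invoke the known result of \cite{Kwisthout11} that \map\ itself is fixed-parameter tractable in $\{|\mathbf{H}|, \tw, c\}$: one enumerates the at most $c^{|\mathbf{H}|}$ hypothesis assignments and, for each, performs probabilistic inference using a junction tree in time fixed-parameter tractable in $\{\tw, c\}$. Composing this with the outer $c^{|\mathbf{R}|}$-bounded loop gives an overall running time of the form ``polynomial in the input size times a function of $\{|\mathbf{H}|, |\mathbf{R}|, \tw, c\}$'', which is exactly fixed-parameter tractability. For the second parameter set $\{|\mathbf{H}|, |\mathbf{R}|, |\mathbf{I}|, c\}$ I replace the junction-tree subroutine by a brute-force \map: enumerate all $c^{|\mathbf{H}| + |\mathbf{I}|}$ joint assignments to $\mathbf{H} \cup \mathbf{I}$, evaluate the joint probability of each via the chain rule in polynomial time, then sum over $\mathbf{I}$ and take the argmax over $\mathbf{H}$; the same composition argument applies.

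The main thing to verify, and really the only non-routine point, is that moving $\mathbf{R}$ from a universally quantified role in \mindep\ to an observed role in each subordinate \map\ instance does not damage any parameter: the underlying graph, the hypothesis set, and the intermediate set are all unchanged, and cardinalities can only shrink on the variables that are now fixed by $\mathbf{r}$. With this observation in hand, the corollary follows by composing the outer $c^{|\mathbf{R}|}$-bounded enumeration with whichever of the two fixed-parameter tractable \map\ subroutines matches the chosen parameter set.
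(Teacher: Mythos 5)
Your proposal is correct and follows essentially the same route as the paper: the paper derives the corollary directly from the straightforward \mindep\ algorithm, bounding the outer loop by $|\Omega(\mathbf{R})| = \mathcal{O}(c^{|\mathbf{R}|})$ and invoking the known fixed-parameter tractability results for \map\ from \cite{Kwisthout11} for each of the two parameter sets. Your additional observation that conditioning on $\mathbf{R}=\mathbf{r}$ leaves the graph, $\mathbf{H}$, $\mathbf{I}$, $\tw$, and $c$ unchanged is exactly the (implicit) justification the paper relies on.
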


\noindent So, in general, for \mindep\ to be tractable we need to constrain the set $\mathbf{R}$ of variables to consider, and make sure that the \map\ queries can be computed tractably by constraining the cardinality of the variables and size of the hypothesis set, and either the tree-width of the network or the size of the set of intermediate variables. 

The following trivial modifications extend this algorithm to the weak case and to the maximization problem:

\begin{algorithm}[H]
\SetAlgoLined
\KwIn{Bayesian network partitioned in $\mathbf{E}=\mathbf{e}$, $\mathbf{H}=\mathbf{h^*}$, $\mathbf{R}$, and $\mathbf{I}$.}
\KwOut{\yes\ if $\mathbf{H}$ is MAP-independent from $\mathbf{R}$ given $\mathbf{e}$, \no\ if otherwise.}
 \ForEach{$R\in\mathbf{R}$}
 {
 \ForEach{$r\in\Omega(R)$}
 {
 \If{$\mathrm{argmax}_{\mathbf{H}} \cprob{\mathbf{H},R = r}{\mathbf{e}} \neq \mathbf{h^*}$}
 {\Return{\no\;}}
 }
 }
 \Return{\yes\;}
 \caption{\wmindep\ algorithm variant}
\end{algorithm}

\begin{algorithm}[H]
\SetAlgoLined
\KwIn{Bayesian network partitioned in $\mathbf{E}=\mathbf{e}$, $\mathbf{H}=\mathbf{h^*}$, $\mathbf{I^P}$, and $\mathbf{I^O}$; integer $k$.}
\KwOut{\yes\ if there is a subset $\mathbf{R}\subseteq\mathbf{I^P}$ of size $|\mathbf{R}| = k$ such that $\mathbf{H}$ is MAP-independent from $\mathbf{R}$ given $\mathbf{e}$, \no\ if otherwise.}
 \ForEach{$\mathbf{R}\subseteq\mathbf{I^P}$ of size $|\mathbf{R}| = k$}
 {
 \ForEach{$\mathbf{r}\in\Omega(\mathbf{R})$}
 {
 \If{$\mathrm{argmax}_{\mathbf{H}} \cprob{\mathbf{H},\mathbf{R} = \mathbf{r}}{\mathbf{e}} \neq \mathbf{h^*}$}
 {\bf break\;}
 }
 \Return{\yes\;}
 }
 {\Return{\no\;}}
 \caption{\maxmindep\ algorithm}
\end{algorithm}

We conclude that $p$-\wmindep\ is fixed-parameter tractable exactly when MAP is fixed-parameter tractable, and that $p$-\maxmindep\ is fixed-parameter tractable for $p=\{|\mathbf{H}|, |\mathbf{I^P}|, \tw, c \}$,  $p=\{|\mathbf{H}|, |\mathbf{I}|, c \}$. Note that the latter results somewhat obfuscate the difference in complexity between \mindep\ and \maxmindep; in the third algorithm the additional \textbf{foreach} loop (looping over all subsets of $\mathbf{I^P}$ of size $k$) collapses if $|\mathbf{I^P}|$ is fixed.

\section{Conclusion and future work}
\label{sec:Conclusion}

In this paper we introduced MAP-independence as a formal notion, relevant for decision support and justification of decisions. In a sense, MAP-independence is a relaxation of conditional independence, suggested by Pearl \cite{Pearl87} to be a scaffold for human context-dependent reasoning. We suggest that MAP-independence may be a useful notion to further explicate the variables that are {\em relevant} for the establishment of a particular MAP explanation. Establishing whether the MAP explanation is MAP-independent from a set of variables given the evidence (and so, whether these variables are relevant for justifying the MAP explanation) is a computationally intractable problem; we expect that the problem of finding the maximal set for which the MAP explanation is MAP-independent is even harder, modulo commonly assumed separation properties of the counting hierarchy. However, for a {\em specific} variable of interest $I$ (or a small set of these variables together) the problem is tractable whenever MAP can be computed tractably; in practice, this may suffice for usability in typical decision support systems.

There are many related problems of interest that one can identify, but which will be delegated to future work. For example, if the set of relevant variables is large, one might be interested in deciding whether observing one variable can bring down this set (by more than one, obviously). Initial work in this area has been proposed by \cite{Janssen22} who looked at several heuristic approaches (e.g., Gini-index \cite{Sent07} and linear-value utility \cite{Gaag93}) to decide which variable may best be observed.

Another related problem would be to decide upon the observations that are relevant for the MAP explanation (i.e., had we not observed $E \in \mathbf{E}$ or had we observed a different value, would that change the MAP explanation?). This would extend previous work \cite{Meekes15} where the relevance of $E$ for computing a posterior probability (conditioned on $\mathbf{E}$) was established.

The notion of MAP-independence can, for practical purposes, be to strict; furthermore, when an explanation is MAP-dependent on a set of variables, the `amount of impact' can be relevant. Various authors have proposed a form of {\em quantification} of MAP-independence, for example, based on the summed probability of joint value assignments to $\mathbf{R}$ for which the MAP explanation does not change \cite{Valero-Leal22}, the proportion of such joint value assignments (ignoring their probability)
\cite{Elteren22}, or the average structural impact (in terms of Hamming distance) on the best explanation \cite{Jacobs22}. A thorough overview of these and possibly other ways of quantifying MAP-independence would be welcome.

Finally, in order to test its practical usage, the formal concept introduced in this paper should be put to the empirical test in an actual decision support system to establish whether the justifications supported by the notion of MAP-independence actually help understand and appreciate the system's advise.

\section*{Acknowledgments}
The author is grateful for the valuable feedback received from Nils Donselaar, and acknowledges the highly constructive and thorough comments from the anonymous reviewers of both this version as well as an earlier conference version of this paper \cite{Kwisthout21}.

\bibliographystyle{splncs04}
\bibliography{Kwisthout2021}

\end{document}